\newcommand{\iid}{ \stackrel{\mathrm{i.i.d.}}{\sim} }
\newcommand{\pa}{\mathrm{\pa}}
\def\eqref#1{equation~\ref{#1}}
\def\1{\bm{1}}
\DeclareMathAlphabet{\mathsfit}{\encodingdefault}{\sfdefault}{m}{sl}
\SetMathAlphabet{\mathsfit}{bold}{\encodingdefault}{\sfdefault}{bx}{n}
\DeclareMathOperator*{\argmax}{arg\,max}
\DeclareMathOperator*{\argmin}{arg\,min}
\newcommand\independent{\protect\mathpalette{\protect\independenT}{\perp}}
\def\independenT#1#2{\mathrel{\rlap{$#1#2$}\mkern2mu{#1#2}}}
\theoremstyle{plain}
\newtheorem{theorem}{Theorem}[section]
\newtheorem{lemma}[theorem]{Lemma}
\theoremstyle{definition}
\theoremstyle{remark}
\title{Automatic Debiased Learning\\
from Positive, Unlabeled, and Exposure Data}
\begin{document}

\author{Masahiro Kato}
\author{Shuting Wu}
\author{Kodai Kureishi}
\author{Shota Yasui}

\affil{AI Lab, CyberAgent, Inc.}

\maketitle

\begin{abstract}
We address the issue of binary classification from positive and unlabeled data (PU classification) with a selection bias in the positive data. During the observation process, (i) a sample is exposed to a user, (ii) the user then returns the label for the exposed sample, and (iii) we however can only observe the positive samples. Therefore, the positive labels that we observe are a combination of both the exposure and the labeling, which creates a selection bias problem for the observed positive samples. This scenario represents a conceptual framework for many practical applications, such as recommender systems, which we refer to as ``learning from positive, unlabeled, and exposure data'' (PUE classification). To tackle this problem, we initially assume access to data with exposure labels. Then, we propose a method to identify the function of interest using a strong ignorability assumption and develop an ``Automatic Debiased PUE'' (ADPUE) learning method. This algorithm directly debiases the selection bias without requiring intermediate estimates, such as the propensity score, which is necessary for other learning methods. Through experiments, we demonstrate that our approach outperforms traditional PU learning methods on various semi-synthetic datasets.
\end{abstract}

\section{Introduction}
In many applications, collecting a large amount of labeled data is often costly, and classification algorithms often utilize data whose labels are imperfectly observed. Such an algorithm, known as weakly supervised classification, has gained increasing attention in recent years \citep{Sugiyama2022}. In this study, we address the problem of learning a binary classifier from positive and unlabeled data (PU classification) under a \emph{selection bias}. Specifically, we examine the following labeling mechanism: a sample is exposed to a user, the user labels the exposed sample, and only the positive data labeled by the user are observed (See Figure~\ref{fig:difference2}). In this setting, a selection bias arises as the labeled positive data do not correspond to pure positive labeling events, but rather to the joint events of exposure and positive labeling. To address this issue, in addition to PU data, we assume access to data with comprising pairs of exposure labels and feature random variables. We then propose a learning method that debiases the selection bias from a classifier trained using the observations. We refer to this problem as PU classification with a selection bias and Exposure labels (PUE classification).

In real-world scenarios, such as post-click conversion in online advertising, predicting users' preferences from observations is a crucial task. However, due to privacy concerns, it has become increasingly difficult to use identifiers such as the ``Identifier for Advertisers\footnote{A device identifier of Apple, which is assigns to every device.}''. As a result, advertisement exposure and user consumption data are often stored separately. Consequently, we need post-click conversion prediction models trained from such separate data.

We consider learning algorithms for the PUE classification problem. Under the assumption of strong ignorability \citep{Rosenbaum1983}, which implies that the labeling of positive and negative and exposure are independent given a feature of a sample, we propose an identification strategy for a classifier that predicts the true labels. Under the assumption and identification strategy, we develop a method that automatically debiases the selection bias and returns an estimator of the conditional probability of the true label. Finally, through experimental studies, we confirm the soundness of our proposed method. 

In the context of PU classification, there exist two distinct sampling schemes \citep{elkan2008learning}, one-sample and two-sample settings. Besides, there are two labeling mechanisms \citep{Bekker2020}, selection-completely-at-random (SCAR) and selection-at-random (SAR). 
In this study, we consider both one-sample and two-sample settings, incorporating an additional dataset comprising of exposure labels. Furthermore, we consider SAR with a strong ignorability assumption.
In the one-sample setting, existing studies propose utilizing the inverse propensity score (IPS) method \citep{Horvitz1952} \citep{Bekker2018,Bekker2019}, commonly utilized in missing value imputations. In the two-sample setting, partial identification \citep{Manski2008} has been applied with a suitable condition \citep{kato2018pubp}. Our proposed PUE classification approach is related to one-sample and two-sample PU classification problems with a selection bias, however, it differs from existing methods under differing assumptions. In particular, our proposed method does not necessitate intermediate estimates such as the propensity score in the IPS method \citep{Bekker2018,Bekker2019}, and returns the optimal classifier.

In addition to the PUE classification, we present several extensions with methods for variant settings. These settings and methods can be extended to various scenarios, including semi-supervised classification. Our formulation of semi-supervised classification is based on the problem of classifying positive, negative, and unlabeled data \citep[PNU classification,][]{Sakai2017}, which is an extension of methods for PU classification to semi-supervised classification \citep{duPlessis2015,Gang2016}.

The problem of PUE classification and its variants arises in various practical situations, not only in online advertising. For example, in recommender systems, one of the key tasks is predicting users' preferences from implicit feedback, such as users' clicks (e.g., purchases, views) \citep{Jannach2018, Jiawei2018, Yifan2008, Liang2016}. While collecting explicit feedback, such as ratings, is costly, it is easy to collect implicit feedback, and users' behavior logs can be considered as them \cite{Liu2019}. In implicit feedback, we observe users' positive preferences for items through their clicks. However, we cannot observe negative preferences. A user's non-click on an item does not necessarily mean that the user dislikes the item; perhaps the item was not exposed to the user. Additionally, there can be various other real-world applications, such as information retrieval \citep{Joachims2016,Joachims2017,Xuanhui2918}, inlier-based outlier detection, and crowdsourcing.

The contributions of this study are summarized follows:
\vspace{-0.5\baselineskip}
\begin{itemize}
    \vspace{-0.5\baselineskip}
    \item problem formulation of PUE classification with a selection bias by using the potential outcome framework.
    \vspace{-0.5\baselineskip}
    \item an algorithm, called ADPUE learning, which automatically debias the bias in a classifier.
    \vspace{-0.5\baselineskip}
    \item theoretical justification of our proposed algorithm. 
    \vspace{-0.5\baselineskip}
    \item other related problem formulations, including semi-supervised learning.
    \vspace{-0.5\baselineskip}
    \item empirical analysis of proposed methods by simulations. 
    \vspace{-0.5\baselineskip}
\end{itemize}
\vspace{-0.5\baselineskip}

\section{Problem Setting}
For each sample $i = 1,2,\dots, n$, let $\bm{x}_i \in\mathcal{X}\subset \mathbb{R}^d$ be a feature variable with its space $\mathcal{X}$ and $y_i\in\{1, 0\}$ be a potential random label, where $y_i = 1$ and $y_i = 0$ indicate positive and negative labels, respectively. Our goal is to classify $\bm{x}_i \in \mathcal{X} \subset \mathbb{R}^d$ into one of the two classes $\{1, 0\}$. 

\subsection{Potential Labels}
Because of the selection bias, we cannot observe $y_i$ for all samples. We formulate our data-generating process (DGP) employing the potential outcome framework from \citet{imbens_rubin_2015}. 
For each $i$, we define a potential random variable $(Y_i, X_i)$, where  $Y_i \in \{1, 0\}$ is a potential binary label, and $X_i \in \mathcal{X}$ is a feature random variable. 
The label variable is potential in the sense that it exists even if it is not observed. Then, for each $i$, we assume the following DGP:
\begin{align}
    \big(Y_i, X_i\big) \iid p\big(y|\bm{x}\big)\zeta\big(\bm{x}\big),
\end{align}
where $p\big(y| \bm{x}\big)$ is the conditional density of $Y_i \in \{1, 0\}$ given $X_i = \bm{x}\in\mathcal{X}$, and $\zeta(\bm{x})$ is the density of $\bm{x}$. 

\subsection{Observations}
\label{sec:obs}
In PUE classification, we cannot observe the potential labels $(Y_i)$ directly because of the selection bias. 
Let us denote an exposure event by $E_i \in \{1, 0\}$, where $E_i = 1$ if a sample is exposed to an user, and $E_i = 0$ if not.  Then, we consider the following labeling procedure:
\vspace{-0.5\baselineskip}
\begin{description}
    \vspace{-0.5\baselineskip}
    \item[(i)] a sample $i$ is randomly exposed to an user.
    \vspace{-0.5\baselineskip}
    \item[(ii)] if the sample is exposed $(E_i = 1)$, the user labels the sample ($Y_i$).
    \vspace{-0.5\baselineskip}
    \item[(iii)] only a part of positive data ($Y_i = 1$) is observed.
    \vspace{-0.5\baselineskip}
\end{description}
\vspace{-0.5\baselineskip}
Besides, as separately labeled data, we can observe a pair of $(E_i, Y_i)$. In summary, we define the observed data, or equivalently training data, and the DGP as
\begin{align}
    \mathcal{D}^{\mathrm{PU}} &= \left\{\big(W_i, X_i\big)\right\}^{n^{\mathrm{PU}}}_{i=1}\iid q(w| \bm{x})\zeta(\bm{x}),\\
    \mathcal{D}^{\mathrm{E}} &= \left\{\big(E_i, X_i\big)\right\}^{n^{\mathrm{E}}}_{i=1} \iid \theta(e|\bm{x})\zeta(\bm{x}),
\end{align}
where $W_i = E_iY_i\in\{1, 0\}$, $q\big(w|\bm{x}\big)$ is the conditional density of $W_i$ given $X_i$, and $\theta\big(e|\bm{x}\big)$ is that of $E_i$ given $X_i$. Here, $E_i$ is observable. However, because
$\mathcal{D}^{PU}$ and $\mathcal{D}^{E}$ are separately observed, this problem differs from semi-supervised learning setting, and we cannot learn $p(y=1|x, e=1)$ directly. 

\subsection{Population Risk}
Our goal is to obtain a classifier $g:\mathcal{X}\to \{1, 0\}$ from the observed data (training data) to predict $Y_i$ by using $X_i$. In a binary classification, for a class of classifiers $\mathcal{G}$, the optimal classifier $g^*$ is given by $g^* = \argmin_{g\in\mathcal{G}}R^{0\mathchar`-1}(g)$, where $R^{0\mathchar`-1}(g)$ is the expected
misclassification rate (population risk) when the classifier $g(\bm{x})$ is applied to
unlabeled samples distributed according to $\zeta(x)$:
\begin{align*}
&R^{0\mathchar`-1}(g)=\mathbb{E}\big[Y_i \ell^{0\mathchar`-1}(g(X_i), 1) + (1 - Y_i) \ell^{0\mathchar`-1}(g(X_i), 0)\big].
\end{align*}
where $\mathbb{E}$ is the expectation over $(Y_i, X_i)$ with the density $p(y|\bm{x})\zeta(x)$ and $\ell^{0\mathchar`-1}$ is the zero-one loss such that. $\ell^{0\mathchar`-1}(g, z) =  \mathbbm{1}[g \neq z]$. Here, $\mathbbm{1}[\cdot]$ is an indicator function. We call this population risk an \emph{ideal} population risk.

It is known that an optimal classifier is given as $g^*(\bm{x}) = \mathbbm{1}[p(y = 1| \bm{x})\geq 1/2]$; therefore, we only consider classifiers such that $g(\bm{x}) = \mathbbm{1}[f(\bm{x})\geq 1/2]$, where $f(\bm{x})$ is an estimator of $p(y = 1|\bm{x})$. If we redefine the ideal risk using $f$ and a surrogate loss $\ell(f, z)$, an ideal population risk is 
\begin{align*}
&R^*(f)=\mathbb{E}\big[Y_i \ell(f(X_i), 1) + (1 - Y_i) \ell(f(X_i), 0)\big]\\
&=\mathbb{E}\big[p(Y_i = 1| X_i) \ell(f(X_i), 1) + p(Y_i = 0| X_i) \ell(f(X_i), 0)\big].
\end{align*}
We use the log loss as a surrogate loss $\ell(f, z)$; that is, $\ell(f, 1) = \log(f)$ and $\ell(f, 0) = \log(1 - f)$. 
Under the log loss, as we show in Section~\ref{sec:auto_deb_pue}, a minimizer of the population risk can be interpreted as the conditional probability. 

\textbf{PUE classification.}
In PU classification, there are two distinguished sampling schemes called \emph{one-sample} and \emph{two-sample} scenarios \citep{elkan2008learning,Gang2016}. Our setup introduced in this section is more related to the two-sample scenario but is different from it. 
In the standard PU classification, SCAR is traditionally assumed, i.e., the positive labeled data are
identically distributed as the positive unlabeled data \citep{elkan2008learning}.
The assumption of SCAR is, however, unrealistic in many instances of PU learning, e.g., a patient’s
digital health record \citep{Bekker2019} and a recommendation system \citep{Benjamin2009,Schnabel2016,Saito2020}. In these cases, there is a selection bias \citep{Harel79,angrist_mostly_2008}; the distribution of the positive data may differ between the labeled data
and the unlabeled data. Among several ways of selection bias, we specify the procedure of bias as described in this section.
This setting is a special case of the PU classification problem under SAR and the abstraction of several common real-world applications, such as recommender systems and online advertisements. We call our problem the \emph{PUE classification problem}. 
We illustrate this situation in Figure~\ref{fig:difference2}. Our setting and method are general and can be extended to other settings. For example, in Section~\ref{sec:other_form}, we introduce other scenarios that are more closely related to the two-sample scenario. In Table~\ref{table:comparison} of Section~\ref{sec:related}, we summarize our proposed problem settings.  

\begin{figure}[t]\centering
\includegraphics[scale=0.4]{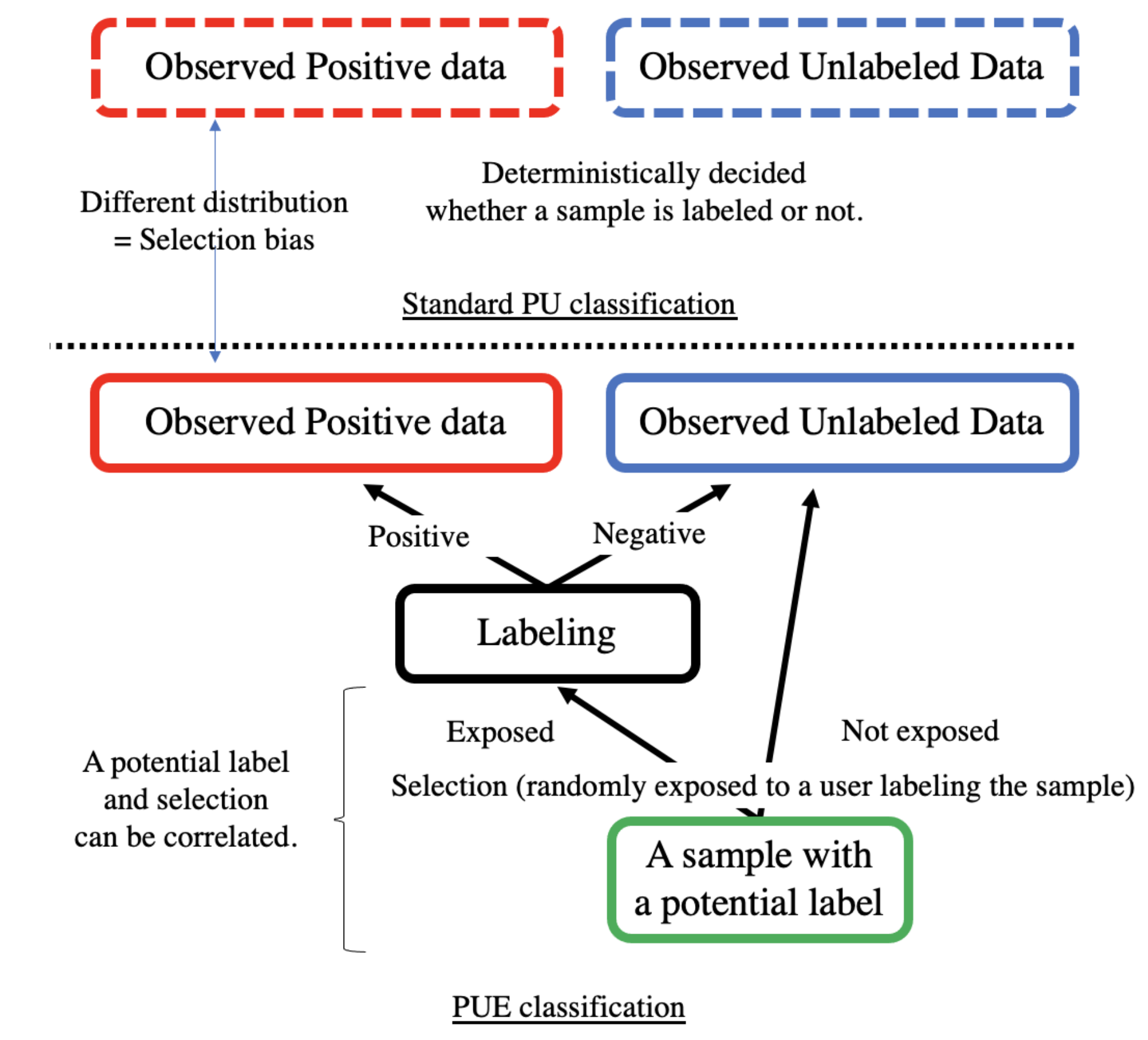}
\vspace{-0.3cm}
\caption{Difference between the standard PU classification and PUE classification.}
\label{fig:difference2}
\vspace{-0.5cm}
\end{figure}

\textbf{Strong ignorability.}
For identification, we maintain throughout the paper the strong ignorability assumption \citep{Rubin1978,Rosenbaum1983}, which asserts that conditioned on the observed covariates, the exposure is independent of the potential outcome, denoted by
\begin{align}
\label{eq:strong1}
    Y_i  \independent E_i | X_i
\end{align}
Under this assumption, we have $p(y = 1, e = 0| \bm{x}) = p(y = 1| \bm{x})p(e = 0| \bm{x})$,
which also implies 
\begin{align}
\label{eq:strong2}
    p(y = 1| e = 0, \bm{x}) = p(y = 1|\bm{x}).
\end{align}
We can directly assume \eqref{eq:strong2}, which is weaker than the latter \eqref{eq:strong1} \citep{Heckman1998}.

\textbf{Test data.}
There are two different scenarios in the definition of test data: \emph{inductive and transductive} settings. In the inductive setting, we  predict labels for general unlabeled test data, $\mathcal{D}^{\mathrm{test}} = \left\{X_i\right\}^{n^{\mathrm{P}}}_{i=1}\iid \zeta(\bm{x})$. In contrast, in the transductive setting, we predict labels for our observed unlabeled data in the training data. In our setting, because we know that all labels of samples in $\mathcal{D}^{\mathrm{P}}$ are $Y_i = 1$, there are two candidates of the target of prediction: $\mathcal{D}^{\mathrm{U}}$ and $\mathcal{D}^{\mathrm{E}}$. In both of the inductive and transductive settings, we predict $Y_i\in\{1, 0\}$ over unlabeled samples with $X_i \iid \zeta(\bm{x})$.

\section{Automatic Debiased PUE (ADPUE) Learning}
\label{sec:auto_deb_pue}
Firstly, we introduce the minimization of the pseudo classification risk, which uses a guess of $p(y = 1| \bm{x})$. Under a selection bias, we cannot construct an unbiased risk function unless the guess is true. However, by repeatedly substituting the estimate of $p(y = 1| \bm{x})$ into the pseudo classification risk as the next guess of $p(y = 1| \bm{x})$ and minimizing the pseudo classification risk, we obtain a sequence of estimates that converge to the true $p(y = 1| \bm{x})$.

\subsection{Identification Strategy}
The central problem is that we can only observe pairs of random variables, $(W_i, X_i)$ and $(E_i, X_i)$, and $Y_i$ is unobservable. To train a classifier of $Y_i$ using $(W_i, X_i)$ and $(E_i, X_i)$, we consider an identification strategy; that is, we investigate under which condition we can train a classifier $p(y| \bm{x})$. 

Let $\theta(y, e| \bm{x}) = p(y|e, \bm{x})\theta(e| \bm{x})$ be the joint density of $(Y_i, E_i)$ conditioned on $X_i$, where $p(y| e, \bm{x})$ is the conditional density of $Y_i$ given $E_i$ and $X_i$.
According to the definition of the conditional density function,  we can identify the density $p(y| \bm{x})$ as follows:
\begin{align*}
    p(y = 1 | \bm{x}) &= \theta(y = 1, e = 1| \bm{x}) + \theta(y = 1, e = 0| \bm{x})\\
    &= q(w = 1| \bm{x}) + p(y = 1| e = 0, \bm{x})\theta(e = 0| \bm{x}),\\
    p(y = 0| \bm{x}) &= \theta(y = 0, e = 1| \bm{x}) + \theta(y = 0, e = 0| \bm{x})\\
    &= q(w = 0| \bm{x}) - \theta(y =1, e = 0| \bm{x})\\
    &= q(w = 0| \bm{x}) - p(y = 1| e = 0, \bm{x})\theta(e = 0| \bm{x}).
\end{align*}
Thus, we can identify $p(y| \bm{x})$ by using $q(w = 0| \bm{x})$, $p(y = 1| e = 0, \bm{x})$, and $\theta(e = 0| \bm{x})$. 

Combining this identification strategy with the strong ignorability assumption, we have
\begin{align}
    p(y = 1 | \bm{x}) = q(w = 1| \bm{x}) + p(y = 1|\bm{x})\theta(e = 0| \bm{x}) .
\end{align}
We exploit this equation to train our classifier. 

\subsection{ADPUE Learning in Population}
In this section, we present an automatic debiased learning method given access to population; that is, expected values of each observable random variable. 

\textbf{The population risk.}
We showed that under our identification strategy, we can identify the optimal classifier $p(y = 1| \bm{x})$. By using the identification strategy, we define our population risk as follows:
\begin{align*}
R(f)=&\mathbb{E}\big[\left(W_i + p(Y_i = 1| X_i)(1 - E_i)\right) \ell(f(X_i), 1)\\
&+ \left(1 - W_i - p(Y_i = 1| X_i)(1 - E_i)\right) \ell(f(X_i), 0)\big].
\end{align*}
Here, note that $R(f)$ is equal to $R^*(f)$ because
\begin{align*}
&\mathbb{E}\big[\left(W_i + p(Y_i = 1| X_i)(1 - E_i)\right) \ell(f(X_i), 1)\big]\\
&= \mathbb{E}\Big[\mathbb{E}\big[\left(q(W_i| X_i) + p(Y_i = 1| X_i)\theta(1-E_i | X_i)\right)| X_i\big]\ell(f(X_i), 1)\big]\\
&= \mathbb{E}\big[\mathbb{E}\Big[p(Y_i = 1| X_i)\ell(f(X_i), 1)\big],
\end{align*}
and similarly, $\mathbb{E}\big[\big(1-W_i - p(Y_i = 1| X_i)(1 - E_i)\big) \ell(f(X_i), 0)\big] = \mathbb{E}\Big[\mathbb{E}\big[p(Y_i = 0| X_i)\ell(f(X_i), 0)\big]$. 

\textbf{The pseudo classification risk.}
In our problem setting, $p(y = 1| \bm{x})$ is a target function to be learned from the training data; therefore, we cannot use the function to define the population risk. Instead of using the true $p(y = 1| \bm{x})$, we consider the following \emph{pseudo classification risk} by using some proxy function $f^\dagger\in\mathcal{F}$:
\begin{align}
\label{eq:basic}
\widetilde{R}&\left(f; f^\dagger\right)= \mathbb{E}\big[\left(W_i +  f^\dagger(X_i)(1 - E_i)\right) \ell(f(X_i), 1)\nonumber\\
&\ \ \ \ \ \ \ \  + \left(1-W_i -  f^\dagger(X_i)(1 - E_i)\right) \ell(f(X_i), 0)\big].
\end{align}
For the pseudo classification risk with the log loss function, let us denote a minimizer of \eqref{eq:basic} by $\widetilde{f}$, that is, $\widetilde{f} \in \argmin_{f\in\mathcal{F}}\widetilde{R}\left(f; f^\dagger\right)$. Then, we derive the following lemma. We present the proof in Appendix~\ref{appdx:lem:proof}. 
\begin{lemma}\label{lem:2}
For each $\bm{x}\in \mathcal{X}$, $\widetilde{f}(\bm{x}) = q(w=1|\bm{x}) + f^\dagger(\bm{x})\theta(e = 0|\bm{x})$.
\end{lemma}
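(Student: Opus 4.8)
The plan is to exploit the fact that $\widetilde{R}(f; f^\dagger)$ is an expectation of a \emph{pointwise} loss, so that minimization over a sufficiently rich class $\mathcal{F}$ collapses to an independent scalar optimization at each $\bm{x}$. This mirrors the computation already carried out in the excerpt to show $R(f) = R^*(f)$. First I would apply the tower property and condition on $X_i$. Because the surrogate terms $\ell(f(X_i),1)$ and $\ell(f(X_i),0)$ are deterministic given $X_i = \bm{x}$, the inner conditional expectation acts only on the random weight $W_i + f^\dagger(X_i)(1-E_i)$ and its complement. Using $\mathbb{E}[W_i \mid X_i = \bm{x}] = q(w=1|\bm{x})$, $\mathbb{E}[1 - E_i \mid X_i = \bm{x}] = \theta(e=0|\bm{x})$, and the fact that $f^\dagger(\bm{x})$ is non-random given $\bm{x}$, the effective weight on $\ell(f(\bm{x}),1)$ becomes $a(\bm{x}) := q(w=1|\bm{x}) + f^\dagger(\bm{x})\theta(e=0|\bm{x})$, which is precisely the claimed value of $\widetilde{f}(\bm{x})$.

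Next I would rewrite the risk as $\widetilde{R}(f; f^\dagger) = \mathbb{E}\big[a(X_i)\,\ell(f(X_i),1) + (1 - a(X_i))\,\ell(f(X_i),0)\big]$, i.e. an integral against $\zeta$ of a weighted integrand. Since $f(\bm{x})$ may be chosen independently at each point when $\mathcal{F}$ is unrestricted (all measurable functions into the admissible range), the global minimizer is obtained by minimizing the integrand pointwise for $\zeta$-almost every $\bm{x}$. I would then reduce to the scalar problem of optimizing $h(v) = a\log v + (1-a)\log(1-v)$ with $a = a(\bm{x})$: setting $h'(v) = a/v - (1-a)/(1-v) = 0$ gives the unique interior stationary point $v = a$, so $\widetilde{f}(\bm{x}) = a(\bm{x})$, as desired.

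The main obstacle I anticipate is not the calculus but the bookkeeping around admissibility and the loss convention. For $h$ to be well-defined and for the first-order condition to pin down a unique interior optimum, I need $a(\bm{x}) \in (0,1)$ and $\mathcal{F}$ rich enough that the pointwise optimizer $\bm{x} \mapsto a(\bm{x})$ is attainable; I would state these as mild standing assumptions (an unrestricted $\mathcal{F}$ and $\zeta > 0$ on $\mathcal{X}$) so that the pointwise optimum transfers to the ``for each $\bm{x}$'' conclusion of the lemma. I would also be careful about the sign convention of the log loss, ensuring that the stationary point $v = a$ is identified as the relevant extremum of the cross-entropy objective rather than a boundary point, which is exactly what guarantees that the minimizer equals the conditional-probability-type expression $q(w=1|\bm{x}) + f^\dagger(\bm{x})\theta(e=0|\bm{x})$.
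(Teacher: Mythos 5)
Your proposal is correct and follows essentially the same route as the paper's proof: both reduce $\widetilde{R}(f;f^\dagger)$ to an integral against $\zeta$ with pointwise weight $q(w=1|\bm{x}) + f^\dagger(\bm{x})\theta(e=0|\bm{x})$ on $\ell(f,1)$, then solve the scalar first-order condition to get the optimizer at that weight. Your explicit treatment of the tower-property step and of the log-loss sign convention (the stationary point is the interior extremum of the cross-entropy objective, which the paper's stated convention $\ell(f,1)=\log f$ leaves implicit) only makes the same argument slightly more careful.
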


\textbf{Alternate learning.}
Next, we consider the following alternate learning for $t=1,2,\dots,T$. 
Let $f^*_0$ be an initial guess of $p(y=1|\bm{x})$. Then, at round $t$, using 
$f^*_{t-1}$, we obtain 
\begin{align*}
    f^*_t = \argmin_{f\in\mathcal{F}} \widetilde{R}\left(f; f^*_{t-1}\right).
\end{align*}
Then, let $f^*_T$ be an output of the learning process. In the following theorem, we prove $f^*_T\to f^*$ as $T\to \infty$:
\begin{theorem}
For each $\bm{x}\in \mathcal{X}$, as $T\to \infty$, $f^*_T(\bm{x}) \to p(y = 1|\bm{x})$.
\end{theorem}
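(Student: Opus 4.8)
The plan is to recognize the alternate learning procedure as a one-dimensional affine fixed-point iteration at each fixed $\bm{x}$, and to identify the target $p(y=1|\bm{x})$ as its unique fixed point. First I would apply Lemma~\ref{lem:2} with $f^\dagger = f^*_{t-1}$ to obtain the explicit recursion
\begin{align*}
f^*_t(\bm{x}) = q(w=1|\bm{x}) + f^*_{t-1}(\bm{x})\,\theta(e=0|\bm{x}),
\end{align*}
valid for every $\bm{x}\in\mathcal{X}$. Writing $a(\bm{x}) = q(w=1|\bm{x})$ and $b(\bm{x}) = \theta(e=0|\bm{x})$, this is nothing more than the scalar affine map $u \mapsto a(\bm{x}) + b(\bm{x})\,u$ iterated on $f^*_t(\bm{x})$.

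Next I would invoke the identification equation derived under strong ignorability, namely $p(y=1|\bm{x}) = q(w=1|\bm{x}) + p(y=1|\bm{x})\,\theta(e=0|\bm{x})$, which says precisely that $p(y=1|\bm{x})$ is a fixed point of this same affine map. Subtracting the fixed-point equation from the recursion gives, for the error $\delta_t(\bm{x}) := f^*_t(\bm{x}) - p(y=1|\bm{x})$,
\begin{align*}
\delta_t(\bm{x}) = b(\bm{x})\,\delta_{t-1}(\bm{x}) = \big(\theta(e=0|\bm{x})\big)^t\,\delta_0(\bm{x}).
\end{align*}
Hence $f^*_T(\bm{x}) \to p(y=1|\bm{x})$ follows as soon as $\theta(e=0|\bm{x}) < 1$, i.e., under the positivity (overlap) condition $\theta(e=1|\bm{x}) > 0$ guaranteeing that every covariate value has strictly positive exposure probability. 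Since $\delta_0(\bm{x}) = f^*_0(\bm{x}) - p(y=1|\bm{x})$ is bounded (both terms lying in $[0,1]$), the geometric factor drives the error to zero, giving the claimed pointwise convergence.

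The main obstacle, and the only place where an assumption is genuinely needed, is ensuring the contraction factor $b(\bm{x}) = \theta(e=0|\bm{x})$ is strictly below one: the degenerate case $\theta(e=0|\bm{x}) = 1$ (a covariate value that is never exposed) leaves $p(y=1|\bm{x})$ unidentified and freezes the iteration at its initialization. A secondary, implicit point is that Lemma~\ref{lem:2} requires $\mathcal{F}$ to be rich enough that the pseudo-risk minimizer attains the stated pointwise form; I would record this as a realizability/flexibility assumption on $\mathcal{F}$. Granting these, the remainder is an elementary geometric-series estimate rather than anything delicate.
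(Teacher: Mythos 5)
Your proof is correct, and it rests on the same two pillars as the paper's: Lemma~\ref{lem:2}, which yields the affine recursion $f^*_t(\bm{x}) = q(w=1|\bm{x}) + f^*_{t-1}(\bm{x})\,\theta(e=0|\bm{x})$, and the geometric factor $\theta(e=0|\bm{x})$ driving convergence. The route differs in a small but meaningful way. The paper unrolls the recursion, sums the geometric series to obtain the limit $q(w=1|\bm{x})/(1-\theta(e=0|\bm{x})) = q(w=1|\bm{x})/\theta(e=1|\bm{x})$, and only at the very end invokes strong ignorability (via $p(y=1, e=1|\bm{x}) = p(y=1|\bm{x})\theta(e=1|\bm{x})$) to recognize this limit as $p(y=1|\bm{x})$. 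You instead invoke strong ignorability upfront, through the identification equation $p(y=1|\bm{x}) = q(w=1|\bm{x}) + p(y=1|\bm{x})\theta(e=0|\bm{x})$ already derived in the paper, to recognize the target as the fixed point of the affine map, and then track the error $\delta_t(\bm{x}) = \theta(e=0|\bm{x})^t\,\delta_0(\bm{x})$. The two arguments are mathematically equivalent, but your ordering makes transparent exactly where strong ignorability enters and dispenses with the series summation. More valuably, you state explicitly two hypotheses that the paper's proof uses silently: the overlap condition $\theta(e=1|\bm{x})>0$, without which the paper's step $f^*_0(\bm{x})\theta^T(e=0|\bm{x}) \to 0$ fails and the iteration indeed freezes at its initialization (and $p(y=1|\bm{x})$ is unidentified, as you note); and the richness of the function class needed so that the pseudo-risk minimizer actually attains the pointwise form asserted in Lemma~\ref{lem:2}, whose proof optimizes over all functions valued in $(0,1)$. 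These observations strengthen, rather than deviate from, the paper's argument.
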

\begin{proof}
For each $\bm{x}\in\mathcal{X}$, from Lemma~\ref{lem:2}, $f^*_t = q(w=1|\bm{x}) + f^*_{t-1}(\bm{x})\theta(e = 0|\bm{x})$.
Therefore, we obtain 
\begin{align*}
    &f^*_T(\bm{x}) = q(w=1|\bm{x}) +  \left\{\left(q(w=1|\bm{x}) + f^*_{t-2}(\bm{x})\theta(e = 0|\bm{x})\right)\theta(e = 0|\bm{x})\right\}\\
    &= q(w=1|\bm{x})\sum^{T-1}_{t = 0} \theta^t(e = 0|\bm{x}) + f^*_{0}(\bm{x})\theta^T(e = 0|\bm{x}).
\end{align*}
Here, as $T\to \infty$,
\begin{align*}
    &q(w=1|\bm{x})\sum^{T-1}_{t = 0} \theta^t(e = 0|\bm{x})\to \frac{q(w=1|\bm{x})}{1 - \theta(e = 0|\bm{x})},
\end{align*}
and $f^*_{0}(\bm{x})\theta^T(e = 0|\bm{x}) \to 0$. Therefore, as $T\to \infty$,
\begin{align*}
    f^*_T \to \frac{q(w=1|\bm{x})}{\theta(e = 1|\bm{x})} = \frac{p(y = 1, e = 1|\bm{x})}{\theta(e = 1|\bm{x})} = p(y = 1|\bm{x}), 
\end{align*}
where we used $1 - \theta(e = 0|\bm{x}) = \theta(e = 1|\bm{x})$ and $p(y = 1, e = 1|\bm{x}) = p(y = 1|\bm{x})\theta(e = 1| \bm{x})$ under the strong ignorability assumption. 
The proof is complete. 
\end{proof}

\subsection{Proposed Method: ADPUE Learning}
Because expectations of random variables directly are not accessible, we replace them with their sample averages.

\textbf{Sample approximation.}
When we have training samples, we can naively replace the expectations by the corresponding sample averages. For a hypothesis set $\mathcal{H}$, which is a subset of a set of measurable functions, let us define the following empirical risk:
\begin{align*}
\widehat{R}\left(f; f^\dagger\right) =&\widehat{\mathbb{E}}^{\mathcal{D}^{\mathrm{PU}}}\big[W_i \ell(f(X_i), 1)+ \left(1 - W_i \right) \ell(f(X_i), 0)\big]\\
&+ \widehat{\mathbb{E}}^{\mathcal{D}^{\mathrm{E}}}\big[ f^\dagger(X_i)(1 - E_i)\ell(f(X_i), 1)\big]\\
&-  \widehat{\mathbb{E}}^{\mathcal{D}^{\mathrm{E}}}\big[f^\dagger(X_i)(1 - E_i)\ell(f(X_i), 0)\big],\nonumber
\end{align*}
where $\widehat{\mathbb{E}^{\mathrm{D}}}$ denotes an empirical mean using a dataset $\mathcal{D}$. Let $\widehat{f}_{n, 0}$ be an appropriately given initial guess of $f^*$. 

\textbf{Non-negative correction.}
In addition, to gain performance, we introduce a non-negative correction \citep{Kiryo2017} to our proposed empirical risk as
\begin{align*}
&\widehat{R}^{\mathrm{nnPUE}}\left(f; f^\dagger\right) =\\
&\widehat{\mathbb{E}}^{\mathcal{D}^{\mathrm{PU}}}\big[W_i \ell(f(X_i), 1)\big] + \widehat{\mathbb{E}}^{\mathcal{D}^{\mathrm{E}}}\big[ f^\dagger(X_i)(1 - E_i)\ell(f(X_i), 1)\big]\\
&\ + \max\Big\{\widehat{\mathbb{E}}^{\mathcal{D}^{\mathrm{PU}}}\big[\left(1 - W_i \right) \ell(f(X_i), 0)\big] -  \widehat{\mathbb{E}}^{\mathcal{D}^{\mathrm{E}}}\big[f^\dagger(X_i)(1 - E_i)\ell(f(X_i), 0)\Big], 0\Big\}.
\end{align*}
This correction is based on the fact that in the population,
\begin{align*}
    &\Big(q(w = 0) - p(y = 1| x)\theta(e = 0)\Big)\log(1 - f(\bm{x})) \leq 0\\
    &\implies \mathbb{E}\big[ \left(1 - W_i \right) \ell(f(X_i), 0) -  f^\dagger(X_i)(1 - E_i)\ell(f(X_i), 0)\big]\leq 0.
\end{align*}
The term $\widehat{\mathbb{E}}^{\mathcal{D}^{\mathrm{PU}}}\big[\left(1 - W_i \right) \ell(f(X_i), 0)\big] -  \widehat{\mathbb{E}}^{\mathcal{D}^{\mathrm{E}}}\big[f^\dagger(X_i)(1 - E_i)\ell(f(X_i), 0)\big]$ can be negative because $\mathcal{D}^{\mathrm{PU}}$ and $\mathcal{D}^{\mathrm{E}}$ are separate observations. 

\textbf{Empirical risk minimization.}
Then, for each $t = 1,2,\dots, T$, we define an empirical minimizer as 
\begin{align*}
     \hat{f}_{n, t} = \argmin_{f\in\mathcal{H}}\widehat{R}^{\mathrm{nnPUE}}\left(f; \hat{f}_{n, t-1}\right).
\end{align*}
Let $\hat{f}_{n, T}$ be the final output of our proposed algorithm. We show the pseudo code in Algorithm~\ref{alg}. 

In addition, we can directly minimize the empirical risk as
\begin{align*}
     &\hat{f}_{n} = \argmin_{f\in\mathcal{H}}\widehat{R}^{\mathrm{nnPUE}}\left(f\right),
\end{align*}
where $\widehat{R}^{\mathrm{nnPUE}}\left(f\right) := \widehat{R}^{\mathrm{nnPUE}}\left(f; f\right)$. Although we do not theoretically justify minimizing the empirical risk, we experimentally confirm that this formulation also works well. Because direct minimization is easier than the alternate learning, we use this formulation in our experiments although the alternate learning also works.  

We call our proposed method automatic debiased PUE (ADPUE), which trains a classifier by minimising the empirical risks, because unlike related methods, such as the propensity score in EM method of \citet{Bekker2018}, our proposed method can directly debias the selection bias without requiring intermediate estimates.

\begin{algorithm}[tb]
   \caption{ADPUE learning}
   \label{alg}
\begin{algorithmic}
   \STATE {\bfseries Input:} Training data $\mathcal{D}^{\mathrm{PU}}$ and $\mathcal{D}^{\mathrm{E}}$. 
   \STATE {\bfseries Initialization:} Set $\hat{f}_{n, 0}$. 
   \FOR{$t=1$ to $T$}
   \STATE Obtain $\hat{f}_{n, t} = \argmin_{f\in\mathcal{H}}\widehat{R}\left(f; \hat{f}_{n, t-1}\right)$. 
   \ENDFOR
   \STATE Return $\hat{f}_{n, T}$. 
\end{algorithmic}
\vspace{-0.1cm}
\end{algorithm} 

\section{Variants of PUE Classification}
\label{sec:other_form}
In addition to the DGP in Section~\ref{sec:obs}, we can consider various formulations of the problem of PUE classification using our basic formulation and idea. Here, for example, we introduce two problem instances with different DGPs. These settings are related to two-sample scenario, rather than one-sample scenario in Section~\ref{sec:obs}. 

\subsection{PE Classification}
First, we introduce a situation where we can only observe positive data and exposure data as
\begin{align*}
    \mathcal{D}^{\mathrm{P}} &= \left\{\big( X_i\big)\right\}^{n^{\mathrm{P}}}_{i=1}\iid \zeta(\bm{x}| w = 1),\\
    \mathcal{D}^{\mathrm{E}} &= \left\{\big(E_i, X_i\big)\right\}^{n^{\mathrm{E}}}_{i=1} \iid \theta(e|\bm{x})\zeta(\bm{x}),
\end{align*}
We also assume that the class prior $q(w = 1)$ is known. 
We call this problem classification from positive and exposure data (PE classification).

If we estimate $p(e=1|x)$ and $p(y=1, e=1|x)$ separately, we can still estimate $p(y=1|x)$. However, separate estimation is known as costly because it requires learning two models separately. Moreover, as our method, it is often empirically reported that an end-to-end method can improve performances. Therefore, we consider direct empirical risk minimization.

For this problem, we define our empirical risk as
\begin{align*}
&\widehat{R}^{\mathrm{nnPE}}\left(f; f^\dagger\right) = q(w = 1) \widehat{\mathbb{E}}^{\mathcal{D}^{\mathrm{P}}}\big[\log (f(X_i))\big]\\
&\ + \max\big\{\widehat{\mathbb{E}}^{\mathcal{D}^{\mathrm{E}}}\big[ \ell(f(X_i), 0)\big]  - q(w = 1) \widehat{\mathbb{E}}^{\mathcal{D}^{\mathrm{P}}} \big[\log (1 - f(X_i))\big], 0\big\}\\
&\ + \widehat{\mathbb{E}}^{\mathcal{D}^{\mathrm{E}}}\big[ f^\dagger(X_i)(1 - E_i)\ell(f(X_i), 1)\big]\\
&\ -  \widehat{\mathbb{E}}^{\mathcal{D}^{\mathrm{E}}}\big[f^\dagger(X_i)(1 - E_i)\ell(f(X_i), 0)\big].
\end{align*}
We call a learning method minimizing this empirical risk Automatic Debiased PE (ADPE) learning.

\subsection{Fully Separated PUE Classification}
Next, we consider a situation where we can observe positive, unlabeled, and exposure data, separately; that is, we observe
\begin{align*}
    \mathcal{D}^{\mathrm{P}} &= \left\{\big( X_i\big)\right\}^{n^{\mathrm{P}}}_{i=1}\iid \zeta(\bm{x}| w = 1),\\
    \mathcal{D}^{\mathrm{U}} &= \left\{\big( X_i\big)\right\}^{n^{\mathrm{U}}}_{i=1}\iid \zeta(\bm{x}),\\
    \mathcal{D}^{\mathrm{E}} &= \left\{\big(E_i, X_i\big)\right\}^{n^{\mathrm{E}}}_{i=1} \iid \theta(e|\bm{x})\zeta(\bm{x}),
\end{align*}
We also assume that the class prior $q(w = 1)$ is known. We call this problem the fully separated PUE classification (FPUE classification).

We replace $\widehat{\mathbb{E}}^{\mathcal{D}^{\mathrm{E}}}\big[\left(1 - Y_i \right) \ell(f(X_i), 0)\big]$ in $\widehat{R}^{\mathrm{nnPE}}\left(f; f^\dagger\right)$ with $\widehat{\mathbb{E}}^{\mathcal{D}^{\mathrm{U}}\cap \mathcal{D}^{\mathrm{E}}}\big[\left(1 - Y_i \right) \ell(f(X_i), 0)\big]$, where $\widehat{\mathbb{E}}^{\mathcal{D}^{\mathrm{U}}\cap \mathcal{D}^{\mathrm{E}}}[\cdot]$ denotes the sample average over a joint set of $\mathcal{D}^{\mathrm{U}}$ and $\mathcal{D}^{\mathrm{E}}$. We call a learning method minimizing this empirical risk the Automatic Debiased FPUE (ADFPU) learning.

\section{Extensions to Semi-Supervised Classification}
In this section, we extend our method for the PUE classification to a setting of semi-supervised classifications, where we can use labeled positive and negative data and unlabeled data. We present two different settings, called Semi-Supervised Classification with Exposure Labels (SSE) and Separated SSE (3SE), respectively. 

\subsection{SSE Classification}
We consider a situation where exposure labels are simultaneously observed with positive and unlabeled data in the PUE classification problem, called the SSE classification problem.
In the standard semi-supervised classification, we assume that whether a sample is labeled or unlabeled is determined deterministically. However, in many real-world applications, a sample is often randomly selected as labeled or unlabeled data, and the selection is correlated with the (potential) label. Our formulation is one of the problem settings. We illustrate the concept in Figure~\ref{fig:difference}.

\begin{figure}[t]\centering
\includegraphics[scale=0.4]{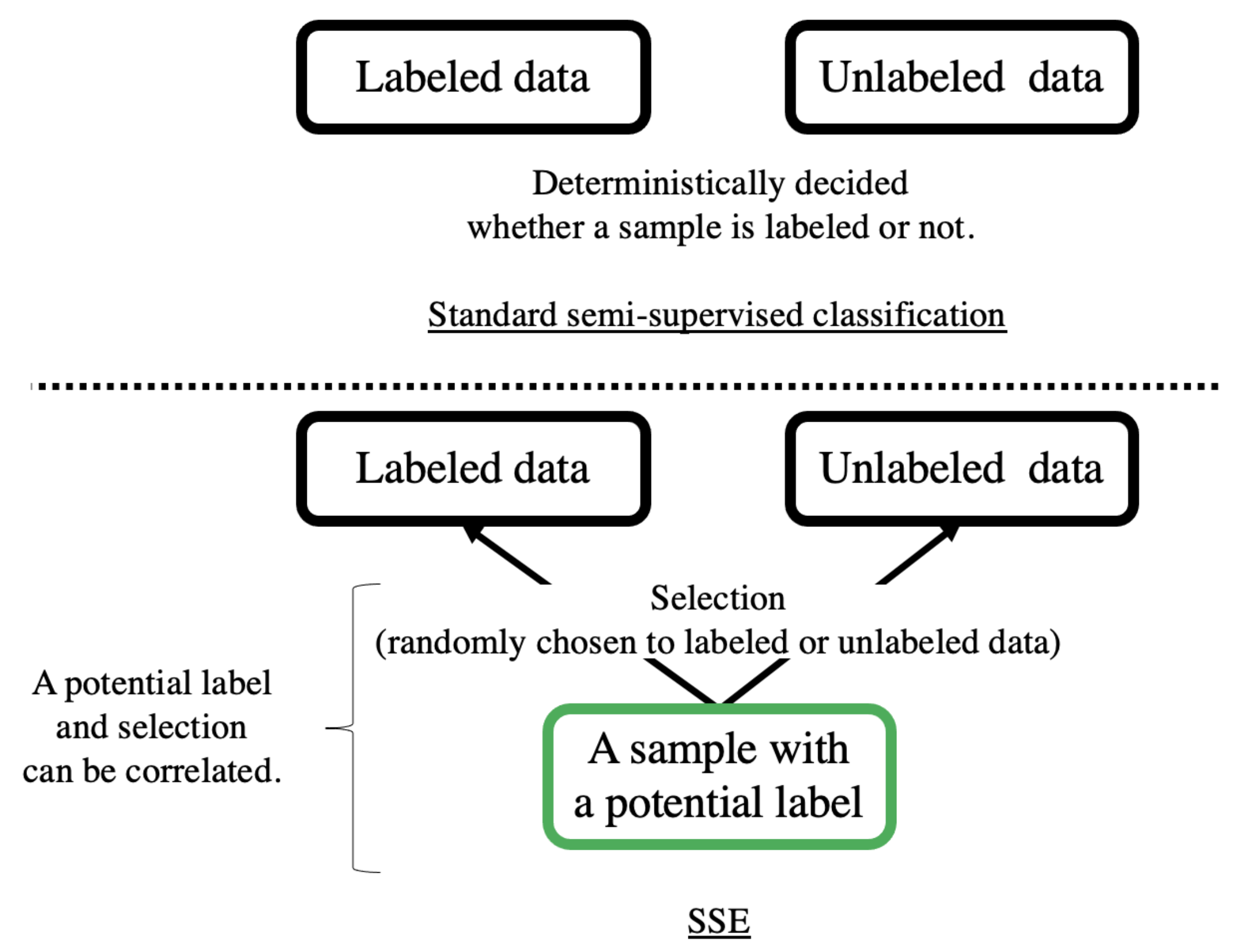}
\vspace{-0.3cm}
\caption{Difference between the standard semi-supervised classification and SSE.}
\vspace{-0.3cm}
\label{fig:difference}
\end{figure}

\textbf{Observations.}
In the SSE problem, we observe
\begin{align}
    &\mathcal{D}^{\mathrm{SSE}} = \left\{\Big(W_i, E_i, X_i\Big)\right\}^{n^{\mathrm{SSE}}}_{i=1},
\end{align}
where $W_i = E_iY_i$, $q\big(w|e, \bm{x}\big)$ is the conditional density of $W_i$ given $(E_i, X_i)$, and $\theta\big(e|\bm{x}\big)$ is that of $E_i$ given $X_i$. We observe a set $\mathcal{D} = \big\{\big(W_i, E_i, X_i\big)\big\}^n_{i=1}$ and call it training data. 

\textbf{Automatic Debiased Supervised (ADS) learning.}
First, we demonstrate that in this setting, we can train $p(y = 1|\bm{x})$ by applying the standard supervised classification method directly. From $q(w|e, \bm{x}) = \frac{\theta(w, e| \bm{x})}{\theta(e| \bm{x})}$, under the strong ignorability assumption, we have
\begin{align*}
    &q(w|e = 1, \bm{x}) = \frac{\theta(w, e = 1| \bm{x})}{\theta(e = 1| \bm{x})} = \frac{\theta(y, e = 1| \bm{x})}{\theta(e = 1| \bm{x})}= p(y|\bm{x}),\\
    &q(w=1|e = 0, \bm{x}) = \frac{\theta(w=1, e = 0| \bm{x})}{\theta(e = 0| \bm{x})} = 0,\\
    &q(w=0|e = 0, \bm{x}) = \frac{\theta(w=0, e = 0| \bm{x})}{\theta(e = 0| \bm{x})} = 1.
\end{align*}
Let $\mathcal{D}^{\mathrm{L}} = \{(W_i, X_i)\}^{n^{\mathrm{L}}}$ be a subset of $\mathcal{D}^{\mathrm{SSE}}$, which are samples with $E_i = 1$. Note that $W_i = Y_i E_i =  Y_i$ in the subset. Let us consider the following empirical risk:
\begin{align*}
    &\widehat{R}^{\mathrm{L}}\left(f\right) =\widehat{\mathbb{E}}^{\mathcal{D}^{\mathrm{L}}}\Big[W_i\ell(f(X_i), 1) + (1 - W_i)\ell(f(X_i), 0)\Big].
\end{align*}

This empirical risk is unbiased with regard to $R^*(f)$ because
\begin{align*}
    &\mathbb{E}\left[\widehat{R}^{\mathrm{L}}\left(f\right) \right]= \mathbb{E}\left[W_i\ell(f(X_i), 1) + (1 - W_i)\ell(f(X_i), 0)\right]\\
    &= \theta(e = 1)\mathbb{E}\big[Y_i\ell(f(X_i), 1) + (1 - Y_i)\ell(f(X_i), 0)| E_i = 1\big]. 
\end{align*}
Here, we have $\mathbb{E}\big[Y_i\ell(f(X_i), 1) + (1 - Y_i)\ell(f(X_i), 0)| E_i = 1\big] = \mathbb{E}\big[p(Y_i=1| X_i)\ell(f(X_i), 1) + p(Y_i=0| X_i)\ell(f(X_i), 0)|E_i = 1\big]$.
Then, we have
\begin{align*}
    &\mathbb{E}\left[\widehat{R}^{\mathrm{L}}\left(f\right) \right]= \theta(e = 1)\mathbb{E}\big[p(Y_i=1| X_i)\ell(f(X_i), 1) + p(Y_i=0| X_i)\ell(f(X_i), 0)|E_i = 1\big]\\
    &= \mathbb{E}\Big[p(Y_i=1| X_i)\ell(f(X_i), 1)+ p(Y_i=0| X_i)\ell(f(X_i), 0)\Big],
\end{align*}
where for a function $Q(y, x)$, we used $\theta(e = 1)\int Q(y, x)p(y, \bm{x}|e = 1)\mathrm{d}y\mathrm{d}\bm{x} = \int Q(y, x)p(y, \bm{x}, e = 1)\mathrm{d}y\mathrm{d}\bm{x} = \mathbb{E}\left[Q(Y_i, X_i)\right]$. 

Therefore, $f = \argmax_{f\in\mathcal{H}}\widehat{R}^{\mathrm{L}} \left(f\right)$ is a classifier obtained by minimization of an unbiased empirical risk. Thus, under the strong ignorability assumption, we can perform unbiased risk minimization only by using labeled samples. However, this approach does not employ a whole unlabeled samples. In this sense, this algorithm is a supervised learning, not semi-supervised learning.  Because many existing studies report that semi-supervised learning, which utilizes unlabeled data in the training process, can improve empirical performance of an algorithm, we consider an algorithm using both labeled and unlabeled data. We call a method minimizing this empirical risk the ADS learning. 

\textbf{Automatic Debiased Semi-Supervised (ADSS) learning.}
As well as the ADPUE learning, we can construct the following empirical risk: $\widehat{R}^{\mathrm{SS}}\left(f; f^\dagger\right) =$
\begin{align*}
&\widehat{\mathbb{E}}^{\mathcal{D}^{\mathrm{PUE}}}\Big[W_i \ell(f(X_i), 1) + f^\dagger(X_i)(1 - E_i)\ell(f(X_i), 1)\\
&\ +\left(1 - W_i \right) \ell(f(X_i), 0) -  f^\dagger(X_i)(1 - E_i)\ell(f(X_i), 0)\Big].
\end{align*}
We do not impose a non-negative correction because the constraint is not violated if $f(\bm{x})\in (0, 1)$. We call a method minimizing this empirical risk the ADSS learning. 

\textbf{Double ADSS learning.}
We can minimize the empirical risk of supervised learning with that of semi-supervised learning simultaneously. 
\begin{align*}
    \widehat{R}^{\mathrm{DoubleADSS}}\left(f; f^\dagger\right) = \alpha\widehat{R}^{\mathrm{nnSS}}\left(f; f^\dagger\right)  + (1-\alpha)\widehat{R}^{\mathrm{L}}\left(f; f^\dagger\right),
\end{align*}
where $\alpha \in [0, 1]$ is a weight. We call a method minimizing this empirical risk the Double ADSS (DADSS) learning. 

\subsection{3SE Classification}
In the 3SE problem. we consider the following DGP:
\begin{align*}
    \mathcal{D}^{\mathrm{SSE}} = \left\{\big(W_i, E_i, X_i\big)\right\}^{n^{\mathrm{SSE}}}_{i=1},\quad \mathcal{D}^{\mathrm{PU}} = \left\{\big(W_i,  X_i\big)\right\}^{n^{\mathrm{PU}}}_{i=1},
\end{align*}
In this case, in addition to $\widehat{R}^{\mathrm{L}}\left(f; f^\dagger\right)$, we have the following empirical risk:
\begin{align*}
\widehat{R}^{\mathrm{nnPUE}}\left(f; f^\dagger\right) &=\widehat{\mathbb{E}}^{\mathcal{D}^{\mathrm{PUE}}\cap \mathcal{D}^{\mathrm{PU}}}\Big[W_i \ell(f(X_i), 1)\Big] + \widehat{\mathbb{E}}^{\mathcal{D}^{\mathrm{PUE}}}\Big[ f^\dagger(X_i)(1 - E_i)\ell(f(X_i), 1)\Big]\\
&\ \ \ + \max\Big\{\widehat{\mathbb{E}}^{\mathcal{D}^{\mathrm{PUE}}\cap \mathcal{D}^{\mathrm{PU}}}\Big[\left(1 - W_i \right) \ell(f(X_i), 0)\Big] -  \widehat{\mathbb{E}}^{\mathcal{D}^{\mathrm{PUE}}}\Big[f^\dagger(X_i)(1 - E_i)\ell(f(X_i), 0)\Big], 0\Big\}.
\end{align*}
We call a method minimizing this empirical risk the Double Automatic Debiased 3SE (AD3SE) learning, which is effective under a situation where the sample size of $\mathcal{D}^{\mathrm{PUE}}$ is small, but that of $\mathcal{D}^{\mathrm{PU}}$ is large.

\section{Related Work}
\label{sec:related}

\begin{table}[t]
 \caption{Comparison of the problem settings.}
 \vspace{-0.2cm}
    \label{table:comparison}
    \centering
    \scalebox{0.8}[0.8]{
    \begin{tabular}{|l|c|c|c|c|}
    \hline
        & P  & U & E & \\
    \hline
       PUE & \multicolumn{2}{|c|}{$\circ$ (one-sample)} & $\circ$ & close to one-sample PU \\
    \hline
       PE & $\circ$ & \multicolumn{2}{|c|}{$\circ$ (one-sample)} & close to two-sample PU\\
    \hline
       FSPUE & $\circ$ & $\circ$ & $\circ$ & close to two-sample PU \\
    \hline
       SSE & \multicolumn{3}{|c|}{$\circ$ (one-sample)} & close to one-sample PU/PNU\\
    \hline
       3SE & \multicolumn{3}{|c|}{$\circ$ (one-sample + PU data) } & close to one-sample PU/PNU\\
     \hline
       one-sample PU & \multicolumn{2}{|c|}{$\circ$ (one-sample)} & - &  \\
     \hline
       two-sample PU & $\circ$ & $\circ$  & - &   \\
    \hline
    \end{tabular}
    }
    \vspace{-0.3cm}
\end{table}


The one sample and two sample settings are also called the
censoring scenario and case-control scenario, respectively \citep{elkan2008learning}. 
As noted by \citet{Gang2016}, the former is slightly more general than the latter. For the two-sample setting, \citet{duPlessis2015} suggests the use of an \emph{unbiased estimator} of the classification risk, known as \emph{unbiased PU learning}. \citet{Kiryo2017} proposes a non-negative correction to prevent the overfitting of neural networks, which has been applied to other problem settings \citep{Nan2020,Kato2021bregman}. The PU classification methods have been applied to the semi-supervised classification problem \citep{Sakai2017,Sakai2018}. 

As explained by \citet{elkan2008learning}, identifying a classifier without assuming how the positive data is labeled is impossible. Thus, the SCAR assumption is typically employed, which posits that the positive labeled data has the same distribution as the positive unlabeled data. 
However, the SCAR assumption is often unrealistic in many PU learning scenarios, such as a patient's electronic health record \citep{Bekker2018} and a recommendation system \citep{Benjamin2009,pmlr-v48-schnabel16}.
In these cases, selection bias \citep{angrist_mostly_2008} may exist, whereby the distribution of the positive data differs between the labeled and unlabeled data. To address this, alternative assumptions have been proposed \citep{Bekker2018,Bekker2019,kato2018pubp,Hsieh2019,Zhao2021}. 

One of the important applications of PUE classification is recommender systems \citep{Yifan2008,Jannach2018,Liang2016}, where we often employ implicit feedback data \cite{Wang2018}, which is similar to PUE classification. \cite{Yifan2008} proposes a weighted matrix factorization, assigning less weight to unclicked items to account for the lower confidence in predictions compared to clicked items. \cite{Liang2016} proposes exposure matrix factorization based on the latent probabilistic model. \cite{Saito2020} applies the IPW method to this problem.

\begin{table*}[ht]
\begin{center}
\vspace{-0.3cm}
\caption{Experiments using semi-synthetic datasets. The best-performing method in each setting is highlighted in \textbf{bold}.}
\vspace{-0.2cm}
\label{tbl:table1}
\scalebox{0.77}[0.77]{
\begin{tabular}{|l||rrrr|rrr||rrrr|rrr|}
\hline
test data &     \multicolumn{7}{|c||}{Inductive}  &      \multicolumn{7}{|c|}{Transductive} \\
\hline
problem setting &      \multicolumn{4}{|c|}{PUE (close to one-sample PU)}   &     \multicolumn{3}{|c||}{3SE}  &     \multicolumn{4}{|c|}{PUE (close to two-sample PU)}   &     \multicolumn{3}{|c|}{3SE} \\
\hline
{} &     Logit  &     uPU  &     EM  &     ADPUE  &     ADS  &     PNU  &     AD3SE  &     Logit  &     uPU  &     EM  &     ADPUE  &     ADS  &     PNU  &     AD3SE \\
\hline
& \multicolumn{14}{|c|}{sample splitting ratio $\alpha = 0.3$} \\
\hline
{\tt australian} &  0.661 &  0.606 &  0.591 &  \textbf{ 0.742} &  0.798 &  0.689 &  \textbf{ 0.849} &  0.626 &  0.606 &  0.591 &  \textbf{ 0.723} &  0.791 &  0.644 &  \textbf{ 0.839} \\
{\tt w8a} &  0.616 &  0.600 &  0.539 &  \textbf{ 0.704} &  \textbf{ 0.781} &  0.753 &  0.767 &  0.593 &  0.579 &  0.539 &  \textbf{ 0.689} &  \textbf{ 0.763} &  0.728 &  0.752 \\
{\tt covtype} &  0.562 &  0.558 &  \textbf{0.636} &  0.580 &  0.628 &  \textbf{0.645} &  0.616 &  0.569 &  0.566 &  \textbf{ 0.636} &  0.582 &  0.637 &  \textbf{ 0.650} &  0.617 \\
{\tt mushrooms} &  0.193 &  0.150 &  0.645 &  \textbf{ 0.884} &  \textbf{ 1.000} &  0.880 &  0.999 &  0.191 &  0.149 &  0.645 &  \textbf{ 0.882} &  \textbf{ 1.000} &  0.860 &  0.999 \\
{\tt german} &  \textbf{ 0.706} &  0.698 &  0.648 &  0.601 &  0.712 &  \textbf{ 0.723} &  0.715 &  \textbf{ 0.706} &  0.698 &  0.648 &  0.603 &  0.708 &  \textbf{ 0.718} &  0.711 \\
\hline
\hline
& \multicolumn{14}{|c|}{sample splitting ratio $\alpha = 0.5$} \\
\hline
{\tt australian} &  0.667 &  0.606 &  0.660 &  \textbf{ 0.680} &  0.784 &  0.649 &  \textbf{ 0.801} &  0.641 &  0.621 &  \textbf{ 0.660} &  0.659 &  0.781 &  0.637 &  \textbf{ 0.792} \\
{\tt w8a} &  0.663 &  0.659 &  0.543 &  \textbf{ 0.726} &  0.765 &  0.748 &  \textbf{ 0.766} &  0.631 &  0.628 &  0.543 &  \textbf{ 0.705} &  0.743 &  0.713 &  \textbf{ 0.746} \\
{\tt covtype} &  0.561 &  0.558 &  \textbf{ 0.651} &  0.592 &  0.639 &  \textbf{0.644} &  0.626 &  0.567 &  0.567 &  \textbf{ 0.651} &  0.589 &  0.644 &  \textbf{ 0.646} &  0.629 \\
{\tt mushrooms} &  0.160 &  0.145 &  0.680 &  \textbf{ 0.876} &  \textbf{ 1.000} &  0.834 &  0.998 &  0.154 &  0.142 &  0.680 &  \textbf{ 0.881} &  \textbf{ 1.000} &  0.814 &  0.998 \\
{\tt german} &  \textbf{ 0.707} &  0.700 &  0.669 &  0.583 &  0.659 &  \textbf{ 0.718} &  0.693 &  \textbf{ 0.705} &  0.696 &  0.669 &  0.580 &  0.651 &  \textbf{ 0.712} &  0.688 \\
\hline
\end{tabular}
}
\end{center}
\vspace{-0.2cm}

\begin{center}
\caption{Experiments using neural networks. The best-performing method in each setting is highlighted in \textbf{bold}.}
\vspace{-0.2cm}
\label{tbl:exp:dataset_neural}
\scalebox{0.79}[0.78]{
\begin{tabular}{|l|c||rrr|rrr||rrr|rrr|}
\hline
\multicolumn{2}{|l||}{test data}& \multicolumn{6}{c||}{Inductive}& \multicolumn{6}{c|}{Transductive} \\
\hline
\multicolumn{2}{|l||}{problem setting} &      \multicolumn{3}{|c|}{PUE}   &     \multicolumn{3}{|c||}{3SE}  &     \multicolumn{3}{|c|}{PUE}   &     \multicolumn{3}{|c|}{3SE} \\
\hline
\multicolumn{1}{|l}{dataset}     & \multicolumn{1}{|c||}{separate\_ratio} & \multicolumn{1}{c}{Logit} & \multicolumn{1}{c}{nnPU} & \multicolumn{1}{c|}{ADPUE} & \multicolumn{1}{c}{ADS} & \multicolumn{1}{c}{PNU} & \multicolumn{1}{c||}{AD3SE} & \multicolumn{1}{c}{Logit} & \multicolumn{1}{c}{nnPU} & \multicolumn{1}{c|}{ADPUE} & \multicolumn{1}{c}{ADS} & \multicolumn{1}{c}{PNU} & \multicolumn{1}{c|}{AD3SE} \\
\hline
\multirow{5}{*}{\texttt{MNIST}} & 0.1 & 0.737 &  0.758 &  \textbf{0.893} &  \textbf{0.977} &  0.779 &  0.974 &                   0.703 &  0.735 &  \textbf{0.865} &  \textbf{0.973} &  0.653 &  0.964 \\
              & 0.3 &      0.750 &  0.838 &  \textbf{0.923} &  \textbf{0.976} &  0.768 &  0.966 &                   0.711 &  0.807 &  \textbf{0.898} &  \textbf{0.970} &  0.643 &  0.946 \\
              & 0.5 &      0.757 &  0.835 &  \textbf{0.944} &  \textbf{0.968} &  0.738 &  0.965 &                   0.710 &  0.807 &  \textbf{0.923} &  \textbf{0.965} &  0.631 &  0.942 \\
              & 0.7 &      0.768 &  0.842 &  \textbf{0.943} &  0.959 &  0.698 &  \textbf{0.960} &                   0.715 &  0.805 &  \textbf{0.921} &  \textbf{0.955} &  0.605 &  0.937 \\
              & 0.9 &      0.754 &  0.869 &  \textbf{0.945} &  0.937 &  0.600 &  \textbf{0.953} &                   0.713 &  0.839 &  \textbf{0.923} &  \textbf{0.931} &  0.558 &  0.929 \\
\hline
\multirow{5}{*}{\texttt{Fashion-MNIST}} & 0.1 &      0.847 &  \textbf{0.917} &  0.885 &  \textbf{0.972} &  0.882 &  0.970 &                   0.816 &  \textbf{0.905} &  0.829 &  \textbf{0.971} &  0.822 &  0.965 \\
              & 0.3 &      0.838 &  \textbf{0.931} &  0.928 &  \textbf{0.970} &  0.864 &  0.966 &                   0.803 &  \textbf{0.922} &  0.900 &  \textbf{0.970} &  0.804 &  0.954 \\
              & 0.5 &      0.840 &  0.939 &  \textbf{0.950} &  \textbf{0.970} &  0.858 &  0.962 &                   0.809 &  0.929 &  \textbf{0.933} &  \textbf{0.966} &  0.801 &  0.944 \\
              & 0.7 &      0.858 &  0.922 &  \textbf{0.953} &  \textbf{0.965} &  0.838 &  0.963 &                   0.817 &  0.906 &  \textbf{0.935} &  \textbf{0.967} &  0.787 &  0.942 \\
              & 0.9 &      0.845 &  0.913 &  \textbf{0.949} &  \textbf{0.961} &  0.777 &  0.952 &                   0.802 &  0.889 &  \textbf{0.930} &  \textbf{0.958} &  0.755 &  0.930 \\
\hline
\multirow{5}{*}{\texttt{CIFAR-10}} & 0.1 & 0.752 &  \textbf{0.838} &  0.809 &  \textbf{0.902} &  0.827 &  0.896 &                   0.723 &  \textbf{0.827} &  0.806 &  \textbf{0.903} &  0.773 &  0.895 \\
        & 0.3 &      0.752 &  0.820 &  \textbf{0.825} &  \textbf{0.897} &  0.808 &  0.892 &                   0.725 &  0.797 &  \textbf{0.819} &  \textbf{0.899} &  0.761 &  0.886 \\
        & 0.5 &      0.770 &  0.815 &  \textbf{0.822} &  \textbf{0.892} &  0.796 &  0.888 &                   0.733 &  0.786 &  \textbf{0.810} &  \textbf{0.894} &  0.745 &  0.879 \\
        & 0.7 &      0.746 &  0.825 &  \textbf{0.848} &  \textbf{0.886} &  0.764 &  0.873 &                   0.719 &  0.795 &  \textbf{0.838} &  \textbf{0.884} &  0.728 &  0.866 \\
        & 0.9 &      0.759 &  0.841 &  \textbf{0.861} &  0.864 &  0.406 &  \textbf{0.880} &                   0.715 &  0.805 &  \textbf{0.855} &  0.854 &  0.388 &  \textbf{0.873} \\

\hline
\end{tabular}
}
\end{center}
\vspace{-0.5cm}
\end{table*}

\section{Experiments}
In this section, we report experimental results which were conducted using semi-synthetic datasets. We investigate the performances under the PUE and 3SE classification problems, called PUE and 3SE, respectively. Note that the 3SE includes the SSE classification problem as a special case.

\paragraph{Experiments with linear models.} First, we conduct experiments using simple linear models for all methods
We use five classification datasets, {\tt australian}, {\tt w8a}, {\tt covtype}, {\tt mushrooms}, and 
{\tt german}, from LIBSVM\footnote{The data is available from  \url{https://www.csie.ntu.edu.tw/~cjlin/libsvmtools/}.}. The information of datasets are summarized in Appendix~\ref{appdx:det_exp} Table~\ref{tbl:exp:dataset}. To make the experimental condition the same, for {\tt w8a}, {\tt covtype} and {\tt mushrooms}, we only use $1,800$ samples. 

For exposure labels, we use the following conditional probability: for $\bm{x} = (x_1, x_2,\dots, x_d)\in\mathcal{X}\subset \mathbb{R}^d$, 
\[
    \theta(e=1|\bm{x}) = \frac{C}{1 + \exp\big(-\big\{x_{13}g_1(\bm{x}) + (1 - x_{13})g_2(\bm{x})\big\}\big)},
\]
where $g_1(\bm{x}) =  x_2 + 2x_3 + 3x_4x_5 + 4x_6 + 5x^2_6$, $g_2(\bm{x}) = x_7 + 2x_8 + 3x_9x_{10} + 4x_{11} + 5x^2_{12}$, and $C$ is a constant. We adjust the constant $C$ to set the unconditional probability $\theta(e=1) = \int \theta(e=1|\bm{x})\zeta(\bm{x})\mathrm{d}\bm{x}$ at a certain value.  

We investigate the performances of our ADPUE, ADS, and AD3SE learning methods. Note that the ADS is a learning method for the SSE setting. Because the 3SE includes the SSE, we can use the ADS in the 3SE. We call standard logistic regression using all observed positive data ``Logit,'' EM method proposed by \citet{Bekker2018} ``EM,''
unbiased PU learning proposed by \citet{duPlessis2015} ``uPU'', PNE learning proposed by \citet{Sakai2017} ``PNU.'' Note that logistic regression only using observed and exposed positive data is the ADS learning. 

The setting of PUE is close to the two-sample PU learning, and we observe $\mathcal{D}^{\mathrm{PU}} = \left\{\big(W_i, X_i\big)\right\}^{n^{\mathrm{P}}}_{i=1}$ and $\mathcal{D}^{\mathrm{E}} = \left\{\big(E_i, X_i\big)\right\}^{n^{\mathrm{E}}}_{i=1}$. The setting of 3SE is a special case of the semi-supervised classification problem, and we observe $\mathcal{D}^{\mathrm{SSE}} = \left\{\big(W_i, E_i, X_i\big)\right\}^{n^{\mathrm{SSE}}}_{i=1}$ and $\mathcal{D}^{\mathrm{PU}} = \left\{\big(W_i,  X_i\big)\right\}^{n^{\mathrm{PU}}}_{i=1}$. The difference from the setting of PUE is the existence of $W_i$ in $\mathcal{D}^{\mathrm{SSE}}$. Given the whole $n$ training samples, we split them into $\mathcal{D}^{\mathrm{PU}}$ and $\mathcal{D}^{\mathrm{E}}$, or $\mathcal{D}^{\mathrm{PU}}$ and $\mathcal{D}^{\mathrm{SSE}}$ with the ratio $\alpha:(1 - \alpha)$, where $\alpha$ is a sample splitting ratio. We report experimental results for $\alpha=0.3$ and $\alpha=0.5$.

For a prediction model, we use simple linear models; that is, $\sum^d_{k=1}\gamma_kx_{k}$ for some parameter $\gamma_k$. Among the samples, we use randomly chosen $300$ samples as test data. We conduct $100$ trials for each setting and show the accuracy of the classifier by computing the average of the correct answer ratios (accuracy) in prediction for test data (inductive) and unlabeled data in the train data (transductive). 
We jointly show the results in Table~\ref{tbl:table1}. For the PUE, the proposed method outperforms other methods. This is due to the limited number of methods that can solve this problem. For the 3SE, other methods, such as the ADS, also perform well.

\paragraph{Experiments with neural networks.} Next, we conduct experiments using neural networks instead of simple linear regression models. Our experiments use two binary classification datasets, which are constructed from the \texttt{MNIST}, \texttt{Fashion-MNIST}, and \texttt{CIFAR-10} datasets. The details of the datasets are described in Appendix~\ref{appdx:det_exp} with additional results. There are $70,000$ samples in \texttt{MNIST}, \texttt{Fashion-MNIST} datasets and $60,000$ samples in \texttt{CIFAR-10} dataset.
We employ the same procedure as the previous experiments using linear  models. Among the samples, we use randomly choose $10,000$ samples as test data. We conduct $100$ trials for each setting and show the accuracy of the classifier by computing the average of the correct answer ratios (accuracy) in prediction for test data (inductive) and unlabeled data in the train data (transductive). 
We jointly show the numerical results in Table~\ref{tbl:exp:dataset_neural}, as well as the accuracy score in Figure~\ref{fig:accuracy}, the precision and recall in Figure~\ref{fig:precision_and_recall}. Obviously, our proposed methods show higher accuracy than other methods, which indicates their excellent performance.

\begin{figure}[h]
\centering
\subfigure{\includegraphics[width=0.3\linewidth, height=0.23\textwidth]{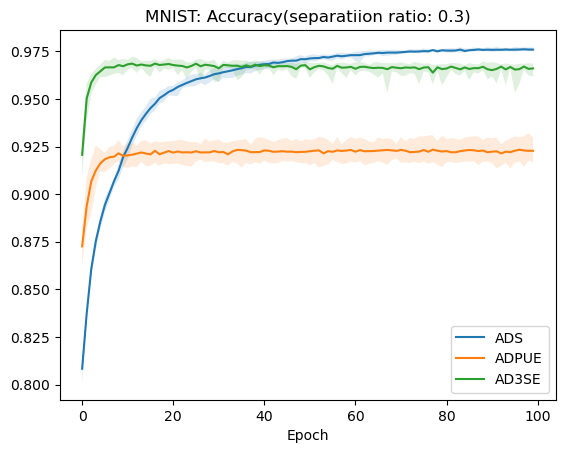}}
\subfigure{\includegraphics[width=0.3\linewidth,height=0.23\textwidth]{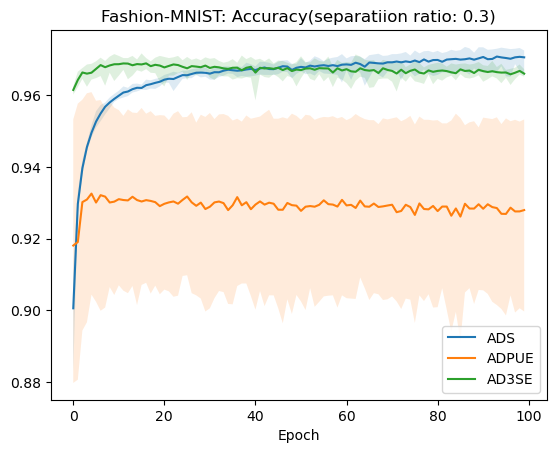}}
\subfigure{\includegraphics[width=0.3\linewidth,height=0.23\textwidth]{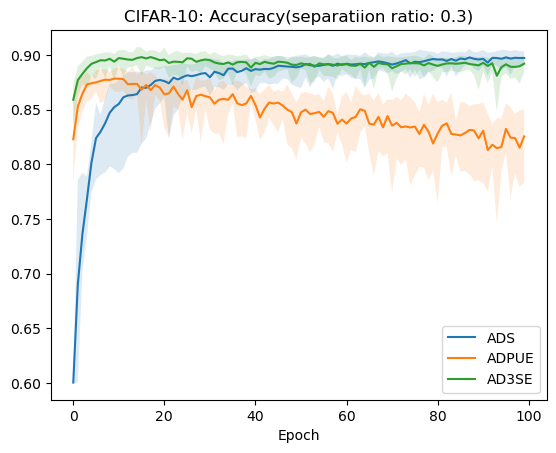}}
\subfigure{\includegraphics[width=0.3\linewidth,height=0.23\textwidth]{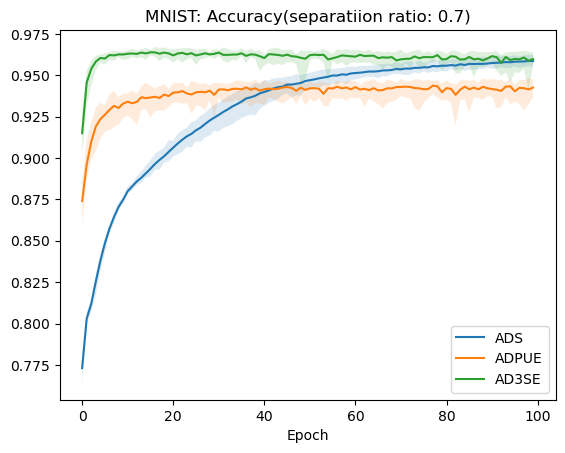}}
\subfigure{\includegraphics[width=0.3\linewidth,height=0.23\textwidth]{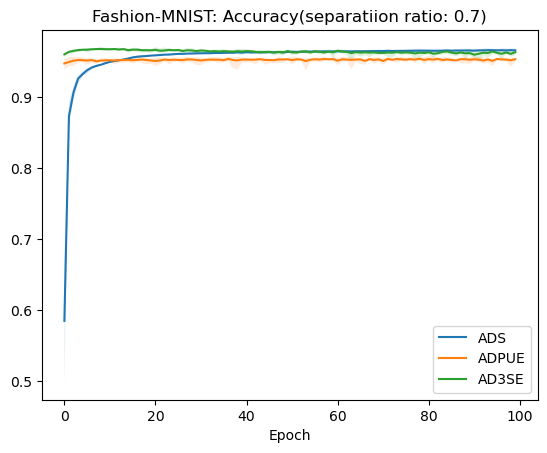}}
\subfigure{\includegraphics[width=0.3\linewidth,height=0.23\textwidth]{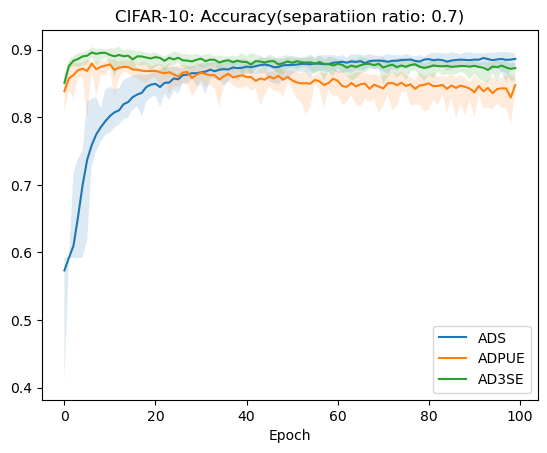}}
\vspace{-0.2cm}
\caption{Results of neural networks experiments. These plots show the accuracy score of training epochs for two different separation ratio and three datasets. (Upper: separation ratio = $0.3$, Lower: separation ratio = $0.7$. Left: \texttt{MNIST}, Center: \texttt{Fashion-MNIST}, Right: \texttt{CIFAR-10})}
\label{fig:accuracy}
\end{figure}

\begin{figure}[h]
\centering
\subfigure{\includegraphics[width=0.3\linewidth, height=0.23\textwidth]{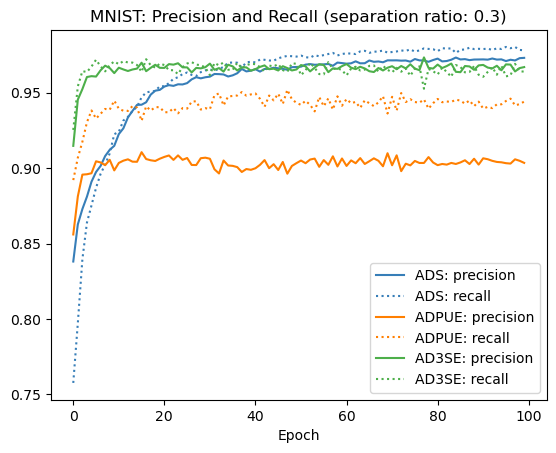}}
\subfigure{\includegraphics[width=0.3\linewidth,height=0.23\textwidth]{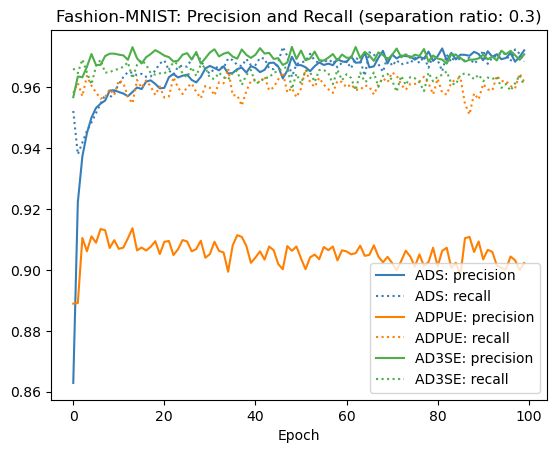}}
\subfigure{\includegraphics[width=0.3\linewidth,height=0.23\textwidth]{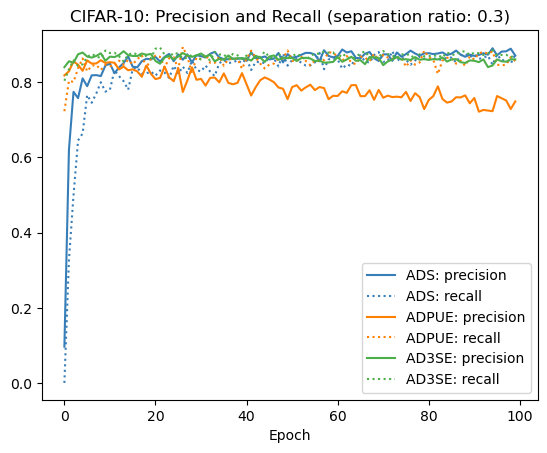}}
\subfigure{\includegraphics[width=0.3\linewidth,height=0.23\textwidth]{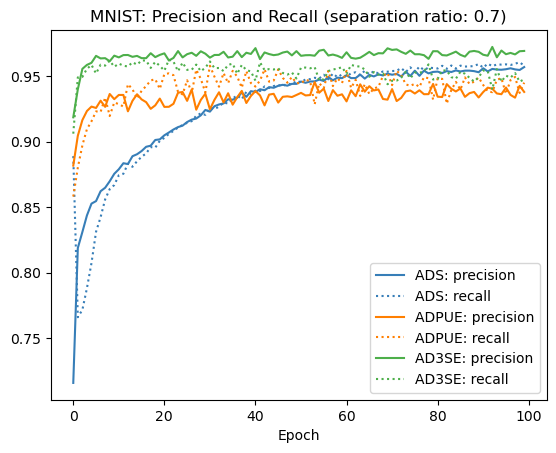}}
\subfigure{\includegraphics[width=0.3\linewidth,height=0.23\textwidth]{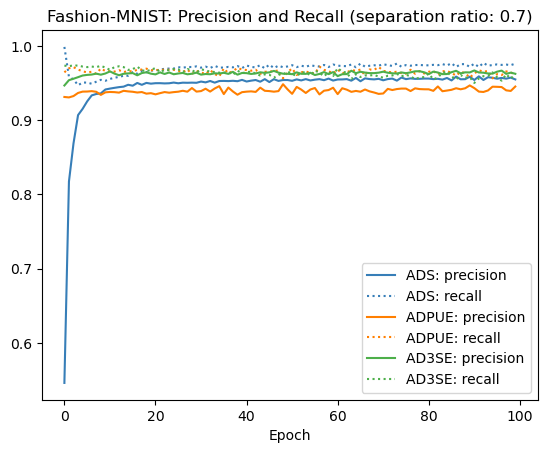}}
\subfigure{\includegraphics[width=0.3\linewidth,height=0.23\textwidth]{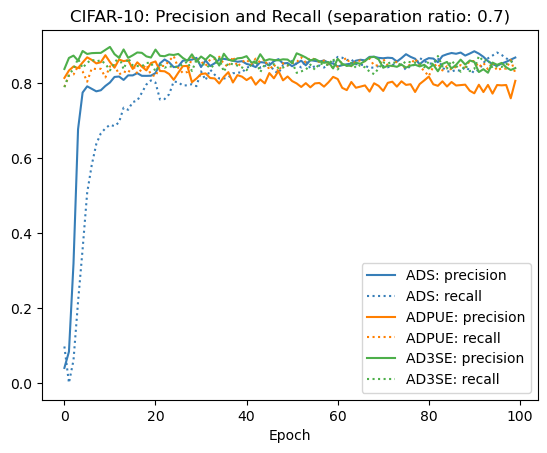}}
\vspace{-0.2cm}
\caption{Results of neural networks experiments. These plots show the precision and the recall of training epochs for two different separation ratio and three datasets. (Upper: separation ratio = $0.3$, Lower: separation ratio = $0.7$. Left: \texttt{MNIST}, Center: \texttt{Fashion-MNIST}, Right: \texttt{CIFAR-10})}
\label{fig:precision_and_recall}
\end{figure}

\section{Conclusion}
We formulated the PU classification problem with a selection bias and exposure labels as the PUE classification problem. We developed an identification strategy to address the selection bias problem under the strong ignorability assumption. By employing this assumption and strategy, we proposed an automatic debias learning method called ADPUE learning. In addition, we showed several additional problem settings and methods related to the problems, including the semi-supervised classification problem. In experiments, we confirmed the soundness of our proposed methods.

\bibliographystyle{icml2023}
\bibliography{arXiv.bbl}

\clearpage

\onecolumn

\appendix

\section{Proof of Lemma~\ref{lem:2}}
\label{appdx:lem:proof}
\begin{proof}
Because we minimize the functional,
\begin{align*}
&\widetilde{R}\left(f; f^\dagger\right)= \int\Big(\left(q(w = 1| \bm{x}) +  f^\dagger(\bm{x})\theta(e = 0| \bm{x})\right) \log(f(\bm{x}))\nonumber\\
&+ \left(q(w = 0| \bm{x}) -  f^\dagger(\bm{x})\theta(e = 0| \bm{x})\right) \log(1 - f(\bm{x}))\Big)\zeta(x)\mathrm{d}\bm{x},
\end{align*}
over all functions taking values in $(0, 1)$, the optimization is reduced to a point-wise minimization of 
\begin{align*}
&\ell(\beta) := \left(q(w = 1| \bm{x}) +  f^\dagger(\bm{x})\theta(e = 0| \bm{x})\right) \log(\beta)\nonumber\\
&\ + \left(q(w = 0| \bm{x}) -  f^\dagger(\bm{x})\theta(e = 0| \bm{x})\right) \log(1 - \beta)\Big)\zeta(x)\mathrm{d}\bm{x}.
\end{align*}
by considering $\beta$ as the optimization variable subject to $\beta \in (0, 1)$, which corresponds to $f(\bm{x})$ given $\bm{x}$. The derivative $\dot{\ell}(\beta)$ with respect to $\beta$ is given as
\begin{align*}
    \dot{\ell}(\beta)=&\frac{q(w = 1| \bm{x}) +  f^\dagger(\bm{x})\theta(e = 0| \bm{x})}{\beta}\ \ \ - \frac{q(w = 0| \bm{x})-  f^\dagger(\bm{x})\theta(e = 0| \bm{x})}{1-\beta}.
\end{align*}
Then, for the optimizer $\beta^*$, $\dot{\ell}(\beta^*) = 0$ is the first order condition of the optimization problem, and the optimizer gives as $\beta^* = q(w = 1| \bm{x}) +  f^\dagger(\bm{x})\theta(e = 0| \bm{x})$. 
For each $\bm{x}\in\mathcal{X}$, we obtain the optimizer. 
\end{proof}

\section{Details of Experiments}
\label{appdx:det_exp}
\subsection{Dataset information}
Here, we describe information of each dataset. We first summarize information of datasets used in experiments with linear models in Table~\ref{tbl:exp:dataset}. 

\begin{table}[h]
\caption{Datasets (Pos.frac.: Positive fraction).}
\label{tbl:exp:dataset}
\vspace{-0.5cm}
\begin{center}
\scalebox{0.8}[0.8]{\begin{tabular}{|c|ccc|}
\hline
Dataset&\# of samples&Pos. frac.&Dimension\\
\hline
{\tt australian} & 690 & 0.445 &  14 \\
{\tt w8a} & 49,749 (only use 1,800) & 0.589 &  300 \\
{\tt covtype} & 581,012 (only use 1,800)  & 0.438 &  784 \\
{\tt mushrooms} & 8124 (only use 1,800) & 0.878 &  112 \\
{\tt german}      & 1,000 & 0.300 &  24 \\
\hline
\end{tabular}}
\end{center}
\vspace{-0.5cm}
\end{table}

Next, we explain datasets used in experiments with neural networks as follows:
\begin{description}
    \item[\texttt{MNIST}] This dataset contains $70,000$ ($28\times 28$) grayscale images, each of which displays a single handwritten digit from 0 to 9. We label digit 1, 3, 5, 7, 9 with a negative value 0 and the others with a positive value 1.
    \item[\texttt{Fashion-MNIST}] This dataset contains $70,000$ ($28\times 28$) grayscale images, each of which with a label from 10 classes. 'T-shirt/top', 'Pullover', 'Coat', 'Shirt', 'Bag' are labeled with a positive value 1 and the others are labeled with 0.
    \item[\texttt{CIFAR-10}] This dataset consists of $60,000$ ($32\times 32$) color images in 10 classes. In our experiment, ‘bird’, ‘cat’, ‘deer’, ‘dog’, ‘frog’, ‘horse’ are labeled with negative value 0, while ‘airplane’, ‘automobile’, ‘ship’, ‘truck’ are labeled with positive value 1.
\end{description}

\subsection{Network structure}
In experiments with neural networks, for the \texttt{MNIST} and \texttt{Fashion-MNIST} datasets, we use a 3-layer \textit{multilayer perceptron} (MLP) with ReLU network structure model ($784 \times 100 \times 1$). For the \texttt{CIFAR-10} dataset, we use a LeNet-type CNN model \citep{LeCun1998}: $(32 \times 32\times 3)$-$C(3 \times 6, 5)$-$C(6 \times 16, 5)$-$400$-$120$-$84$-$1$, where the input is a $32 \times 32$ RGB image, $C(3\times
3, 96)$ means $96$ channels of $3\times3$ convolutions followed by a ReLU activation and a ($2\times 2$) max pooling. We train all models with Adam for $100$ epochs, the learning rate is $0.001$.

\end{document}